\documentclass[sigconf]{acmart}

\usepackage{graphicx}
\usepackage{amsmath}
\usepackage{algorithm}
\usepackage{algpseudocode}
\usepackage{multirow}
\usepackage{booktabs}
\usepackage{wrapfig}
\usepackage{caption}
\usepackage{subcaption}

\algrenewcommand\algorithmicrequire{\textbf{Input:}}
\algrenewcommand\algorithmicensure{\textbf{Output:}}

\usepackage{float}

%

\makeatletter
\newcommand*{\indep}{%
	\mathbin{%
		\mathpalette{\@indep}{}%
	}%
}
\newcommand*{\nindep}{%
	\mathbin{
		\mathpalette{\@indep}{\not}
	}%
}
\newcommand*{\@indep}[2]{%
	\sbox0{$#1\perp\m@th$}
	\sbox2{$#1=$}
	\sbox4{$#1\vcenter{}$}
	\rlap{\copy0}
	\dimen@=\dimexpr\ht2-\ht4-.2pt\relax
	\kern\dimen@
	{#2}%
	\kern\dimen@
	\copy0 %
}

\newtheorem{definition}{Definition}

\AtBeginDocument{%
  \providecommand\BibTeX{{%
    \normalfont B\kern-0.5em{\scshape i\kern-0.25em b}\kern-0.8em\TeX}}}


\begin{document}

\title{Disentangled Representation with Causal Constraints for Counterfactual Fairness}


\author{Ziqi Xu}
\affiliation{%
  \institution{University of South Australia}
  \city{Adelaide}
  \country{Australia}}
\email{ziqi.xu@mymail.unisa.edu.au}

\author{Jixue Liu}
\affiliation{%
	\institution{University of South Australia}
	\city{Adelaide}
	\country{Australia}}
\email{jixue.liu@unisa.edu.au}

\author{Debo Cheng}
\affiliation{%
	\institution{University of South Australia}
	\city{Adelaide}
	\country{Australia}}
\email{dobo.cheng@unisa.edu.au}

\author{Jiuyong Li}
\affiliation{%
	\institution{University of South Australia}
	\city{Adelaide}
	\country{Australia}}
\email{jiuyong.li@unisa.edu.au}

\author{Lin Liu}
\affiliation{%
	\institution{University of South Australia}
	\city{Adelaide}
	\country{Australia}}
\email{lin.liu@unisa.edu.au}

\author{Ke Wang}
\affiliation{%
	\institution{Simon Fraser University}
	\city{Burnaby}
	\country{Canada}}
\email{wangk@cs.sfu.ca}

\renewcommand{\shortauthors}{Z. Xu et al.}


\begin{abstract}
Much research has been devoted to the problem of learning fair representations; however, they do not explicitly the relationship between latent representations. In many real-world applications, there may be causal relationships between latent representations. Furthermore, most fair representation learning methods focus on group-level fairness and are based on correlations, ignoring the causal relationships underlying the data. In this work, we theoretically demonstrate that using the structured representations enable downstream predictive models to achieve counterfactual fairness, and then we propose the Counterfactual Fairness Variational AutoEncoder (CF-VAE) to obtain structured representations with respect to domain knowledge. The experimental results show that the proposed method achieves better fairness and accuracy performance than the benchmark fairness methods.
\end{abstract}

\maketitle

\section{Introduction}
Machine learning algorithms have gradually penetrated into our life~\cite{mehrabi2021survey} and have been applied to decision-making for credit scoring~\cite{kruppa2013consumer}, crime prediction~\cite{kim2018crime} and loan assessment~\cite{cocser2019predictive}. The fairness of these decisions and their impact on individuals or society have become an increasing concern. Some extreme unfair incidents have appeared in recent years. For example, COMPAS, a decision support model that estimates the risk of a defendant becoming a recidivist was found to predict higher risk for black people and lower risk for white people~\cite{brennan2009evaluating}; Google Photos are classifying black people as primates~\cite{zhang2015google}; Facebook users receive a recommendation prompt when watching a video featuring blacks, asking them if they’d like to continue to watch videos about primates~\cite{mac2021facebook}. These incidents indicate that the machine learning models may become a source of unfairness, which may lead to serious social problems. Since most models are trained with data, which will lead to unfair decisions due to discrimination in the training data. Therefore, the key issue for solving unfair decisions becomes whether we can eliminate these biases embedded in the data through algorithms~\cite{lee2019algorithmic}.

To obtain unbiased decisions, many methods~\cite{zemel2013learning,LouizosSLWZ15,madras2018learning,madras2019fairness,song2019learning,creager2019flexibly,wang2019balanced,park2021learning,gitiaux2021learning} are proposed to learn fair representations through two competing goals: encoding data as much as possible, while eliminating any information that transfers through the sensitive attributes. To separate the information from sensitive attributes, various extensions of Variational Autoencoder (VAE) consider minimising the mutual information among latent representations~\cite{LouizosSLWZ15,creager2019flexibly,song2019learning,park2021learning}. For example, \citet{creager2019flexibly} introduced disentanglement loss into the VAE objective function to decompose observed attributes into sensitive latents and non-sensitive latents to achieve subgroup level fairness; \citet{park2021learning} improved the above methods and proposed the mutual attribute latent (MAL) to retain only beneficial information for fair predictions.
\begin{figure*}[t]
	\begin{subfigure}[b]{0.21166\textwidth}
		\centering
		\includegraphics[scale=0.5]{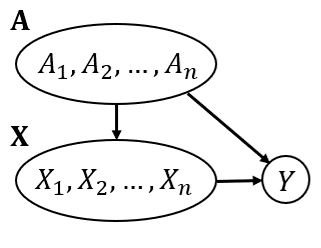}
		\caption{}
		\label{pic:intro1}
	\end{subfigure}
	\begin{subfigure}[b]{0.36755\textwidth}
		\centering
		\includegraphics[scale=0.5]{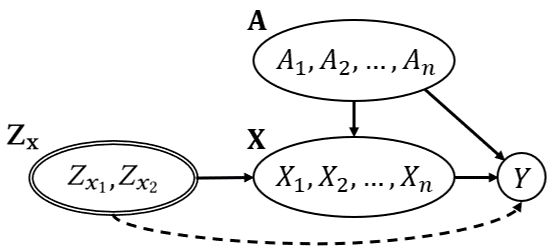}
		\caption{}
		\label{pic:intro2}
	\end{subfigure}
	\begin{subfigure}[b]{0.37069\textwidth}
		\centering
		\includegraphics[scale=0.5]{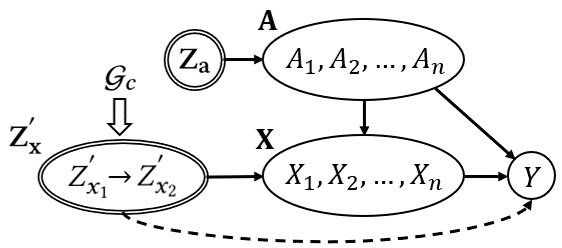}
		\caption{}
		\label{pic:intro3}
	\end{subfigure}
	\caption{(a) Causal graph for an example. (b) The process of existing work on learning fair representations to make predictions. (c) The process of our work. $\mathbf{A}$ is the set of sensitive attributes; $\mathbf{X}$ is the set of other observed attributes; $\mathbf{Z_a}$ is the representation of $\mathbf{A}$; $Y$ is the target attribute. The dotted line represent the prediction process that use the learnt representations obtained by the methods. $\mathbf{Z_x}$ is the representation of $\mathbf{X}$; $\mathbf{Z^{'}_{x}}$ is the structured representation of $\mathbf{X}$ with respect to the conceptual level causal graph $\mathcal{G}_c$. Both $\mathbf{Z_x}$ and $\mathbf{Z^{'}_{x}}$ do not contain sensitive information that transform through the path $\mathbf{A} \to \mathbf{X}$ since these are learned based on the constraints mentioned in the different methods.}
	\label{pic:intro}
\end{figure*}

All existing works of learning fair representations make the assumption that all observed attributes in the real-world can be represented by a number of latent representations. Nevertheless, the latent representations may have causal relationships among them. Let us consider an example where we aim to predict a person's salary using some observed attributes. Following the domain knowledge, we know that a person's salary is determined by two semantic concepts, intelligence and career respectively. We also note that a person's intelligence determines their career with high probability, which can be expressed as a conceptual level causal graph $\mathcal{G}_c$, that is, $Intelligence \to Career$. Figure~\ref{pic:intro1} shows the causal graph that is learnt from the collected data, while the data itself is biased since the set of sensitive attributes $\mathbf{A}$ can affect the target attribute $Y$. 

The existing methods~\cite{LouizosSLWZ15,creager2019flexibly} follow Figure~\ref{pic:intro2} to achieve fair predictions. Specifically, this method uses $\mathbf{Z_{x}}$ to represent the ``concept''  as mentioned while ensuring $\mathbf{Z_{x}}$ do not contain sensitive information that transfer through the path $\mathbf{A} \to \mathbf{X}$. However, this method may not satisfy the domain knowledge since there are causal relationships within these ``concepts''. Therefore, we need a method as shown in Figure~\ref{pic:intro3} that not only ensures the representation of observed attributes with no sensitive information but also retains causal relationships with respect to domain knowledge. We note that our method builds on the premise that $\mathcal{G}_c$ is available, and we believe this assumption is valid. Fairness issues require humans to guide algorithms, and the causal graph should be given by humans rather than given by machine learning~\cite{carey2022causal}. Compared with the complete version of the causal graph (i.e., a causal graph containing causal relationships between all observed attributes), $\mathcal{G}_c$ only covers the relationship between these ``concepts'' and is easier to obtain expert consensus.

On the measurement of fairness, all fair representation learning methods use fairness metrics based on correlation, including the VAE-based methods~\cite{LouizosSLWZ15,creager2019flexibly,song2019learning,park2021learning}. It is well known that correlation or more generally association does not imply causation. Recent studies~\cite{pearl2009causal,pearl2009causality} have shown that quantifying fairness based on correlation may produce higher deviations. Counterfactual fairness is a fundamental framework based on causation. With counterfactual fairness,  a decision is fair towards an individual if it is the same in the actual world and in the counterfactual world when the individual belonged to a different demographic group. 

In this paper, we follow the counterfactual fairness and propose a VAE-based unsupervised fair representation learning method, namely Counterfactual Fairness Variational AutoEncoder (CF-VAE). We take all the observed attributes (except target attribute $Y$) as input, and disentangle the latent representations into $\mathbf{Z_a}$ and $\mathbf{Z_{x}^{'}}$. With the causal constraints, $\mathbf{Z_{x}^{'}}$ retains the causal relationships with respect to domain knowledge while containing no sensitive information. We prove that $\mathbf{Z_{x}^{'}}$ is suitable to train the counterfactually fair predictive models. To the best of our knowledge, this work is the first unsupervised method that uses VAE-based techniques to learn the fair representations that enable counterfactual fairness for downstream predictive models. We make the following contributions in this paper: 
\begin{itemize}
	\item We propose CF-VAE, a novel VAE-based unsupervised counterfactual fairness method. CF-VAE can learn structured representations with no sensitive information and retain causal relationships with respect to the conceptual level causal graph determined by domain knowledge.

	\item We theoretically demonstrate that the structured representations obtained by CF-VAE are suitable for training counterfactually fair predictive models. 
	
	\item We evaluate the effectiveness of the CF-VAE method on real-world datasets. The experiments show that CF-VAE outperforms existing benchmark fairness methods in both accuracy and fairness.
\end{itemize}

The rest of this paper is organised as follows. In Section 2, we discuss background knowledge, including our notations. The details of CF-VAE are shown in Section 3. In Section 4, we discuss the experiment results. In Section 5, we discuss related works. Finally, we conclude this paper in Section 6.

\section{Background}
We use upper case letters to represent attributes and boldfaced upper case letters to denote the set of attributes. We use boldfaced lower case letters to represent the values of the set of attributes. The values of attributes are represented using lower case letters. 

Let $\mathbf{A}$ be the set of sensitive attributes, which should not be used for predictive models; $\mathbf{X}$ be the set of other observed attributes, which may have causal relationships with $\mathbf{A}$; $\mathbf{V}$ be the set of all observed attributes, i.e., $\mathbf{V}=\{\mathbf{A},\mathbf{X}\}$; $Y$ be the target attribute that may have causal relationships with attributes in $\mathbf{A}$ and $\mathbf{X}$. We use $\widehat{Y}(\cdot)$ to represent the predictor. 

$\mathcal{G}_c$ is the conceptual level causal graph and represents domain knowledge. The nodes shown in $\mathcal{G}_c$ are ``concepts'', each of which represents a set of observed attributes that have similar meanings.  Each ``concept'' has causal relationships with the other  ``concepts''. For example, $Intelligence$ is a  ``concept'' in $\mathcal{G}_c$ and it may represent several observed attributes that have similar meanings, including $GPA$, $Education~level$ and $Major$. 

We define that $\mathbf{Z_a}$ is the representation of $\mathbf{A}$; $\mathbf{Z_x}$ is the representation of $\mathbf{X}$ without embedding causal relationships; $ \mathbf{Z_x^{'}}$ is a structured version of $\mathbf{Z_x}$ under the causal constraints of domain knowledge and does not contain sensitive information.

\subsection{Counterfactual Fairness}\label{Counterfactual}
In this paper, a causal graph is used to represent a causal mechanism. In a causal graph, a directed edge, such as $V_j \rightarrow V_i$ denotes that $V_j$ is a parent (i.e., direct cause) and we use $pa_i$ to denote the set of parents of $V_i$. We follow Pearl's~\cite{pearl2009causality} notation and define a causal model as a triple $(\mathbf{U},\mathbf{V},\mathbf{F})$: $\mathbf{U}$ is a set of the latent background attributes, which are the factors not caused by any attributes in the set $\mathbf{V}=\{\mathbf{A},\mathbf{X}\}$; $\mathbf{F}$ is a set of deterministic functions, $V_i = f_i(pa_i,U_{pa_i})$, such that $pa_i\subseteq \mathbf{V}\backslash\{V_i\}$ and $U_{pa_i}\subseteq \mathbf{U}$. Such equations are also known as structural equations~\cite{Bollen89}. Besides, some commonly used definitions in graphical causal modelling, such as faithfulness, $d$-separation and causal path can be found in~\cite{spirtes2000causation, richardson2002ancestral, pearl2009causality}.

With the causal model $(\mathbf{U},\mathbf{V},\mathbf{F})$, we have the following definition of counterfactual fairness:
\begin{definition}[Counterfactual Fairness~\cite{kusner2017counterfactual}]
	\label{def:Counterfactual}
	Predictor $\widehat{Y}(\cdot)$ is counterfactually fair if under any context $\mathbf{X}=\mathbf{x}$ and $\mathbf{A}=\mathbf{a}$,
	\begin{equation}
		\begin{aligned}
			P(\widehat{Y}_{\mathbf{A}\leftarrow \mathbf{a}}(\mathbf{U}) &=y~|~\mathbf{X}=\mathbf{x}, \mathbf{A}=\mathbf{a}) =\\ &P(\widehat{Y}_{\mathbf{A}\leftarrow \bar{\mathbf{a}}}(\mathbf{U}) =y~|~\mathbf{X}=\mathbf{x}, \mathbf{A}=\mathbf{a}),
		\end{aligned}
	\end{equation}for all $y$ and any value $\bar{\mathbf{a}}$ attainable by $\mathbf{A}$. 
\end{definition}

Counterfactual fairness is considered to be related to individual fairness~\cite{kusner2017counterfactual}. Individual fairness means that similar individuals should receive similar predicted outcomes. The concept of individual fairness when measuring the similarity of the individual is unknowable, which is similar to the unknowable distance between the real-world and the counterfactual world in counterfactual fairness~\cite{lewis2013counterfactuals}. 

\subsection{Variational Autoencoder}\label{VAE}
Variational Autoencoder (VAE) was proposed by \citet{KingmaW13}, which was originally applied to image dimensionality reduction. The objective of VAE is to maximise the Evidence Lower Bound (ELBO) $\mathcal{M}$, and derived as follows:
\begin{equation}
	\begin{aligned}
		\log p(\mathbf{V}) &\ge \mathbb{E}_{q(\mathbf{Z}|\mathbf{V})}[\log p(\mathbf{V}|\mathbf{Z})+\log p(\mathbf{Z})-\log q(\mathbf{V}|\mathbf{Z})]	\\& =: \mathcal{M}_{\text{VAE}},
	\end{aligned}
\end{equation}which can also be rewritten as follows:
\begin{equation}
	\label{eq:VAE}
	\mathcal{M}_{\text{VAE}} = \mathbb{E}_{q(\mathbf{Z}|\mathbf{V})}[\log p(\mathbf{V}|\mathbf{Z})] - D_{KL}[q(\mathbf{Z}|\mathbf{V})||p(\mathbf{Z})],
\end{equation}where $\mathbf{V}$ denotes the set of all observed attributes and $\mathbf{Z}$ denotes the set of learnt representations. The encoder $q$ encodes $\mathbf{V}$ into $\mathbf{Z}$, and the decoder $p$ reconstructs $\mathbf{V}$ from $\mathbf{Z}$. 

The first part in Equation~\ref{eq:VAE} can be considered as reconstruction error, i.e., the loss between the reconstructed and the original $\mathbf{V}$. The second part is the distribution distance between the Gaussian prior $p(\mathbf{Z}) = \mathcal{N}(0,\mathbf{I})$ and the $\mathbf{Z}$ encoded with $\mathbf{V}$. The training process of a VAE is to learn the parameters in $q$ and $p$ through the neural networks.

\citet{HigginsMPBGBML17} modified the above VAE objective function by adding a hyperparameter $\beta$ that balances latent channel capacity and independence constraints with reconstruction accuracy. Then, they devised a protocol to quantitatively compare the degree of disentanglement learnt by different models and argued that each dimension of a correctly disentangled representation should capture no more than one semantically meaningful concept. The ELBO of $\beta$-VAE is defined as:
\begin{equation}
	\begin{aligned}
		\mathcal{M}_{\beta\text{-VAE}} = \mathbb{E}_{q(\mathbf{Z}|\mathbf{V})}[\log p(\mathbf{V}|\mathbf{Z})] - \beta D_{KL}[q(\mathbf{Z}|\mathbf{V})||p(\mathbf{Z})].
	\end{aligned}	
\end{equation}  

\citet{kim2018disentangling} showed that $\mathbf{Z}$ can be considered as disentangled if each attribute in $\mathbf{Z}$, denoted as $Z_i$ is independent of each other. They minimised the total collection~\cite{watanabe1960information} of the latent representations as follows, and guaranteed disentanglement.
\begin{equation}
	\text{Total~Correction} = D_{KL}[q(\mathbf{Z})||\prod_{i=1}^{D_{\mathbf{Z}}} q(Z_i)], 
	\label{eqa:TC}
\end{equation}where $D_{\mathbf{Z}}$ is dimension of $\mathbf{Z}$. They proposed Factor-VAE by using total correction and the ELBO of Factor-VAE is defined as:
\begin{equation}
	\mathcal{M}_{\text{Factor-VAE}} = \mathcal{M}_{\text{VAE}} - \gamma D_{KL}[q(\mathbf{Z})||\prod_{i=1}^{D_{\mathbf{Z}}} q(Z_i)].
\end{equation}

\section{Proposed Method}
In this section, we first theoretically demonstrate  that learning counterfactually fair representations are feasible. Then, we propose the Counterfactual Fairness Variational AutoEncoder (CF-VAE) to obtain the structured representations for predictors to achieve counterfactual fairness.

\subsection{The Theory of Learning Counterfactually Fair Representations}
We discuss what types of representations enable downstream predictive models to achieve counterfactual fairness. Following the work in~\cite{glymour2016causal}, we define the three steps for counterfactual inference.
\begin{definition}[Counterfactual Inference~\cite{glymour2016causal}]
	\label{def:inference}
	Given a causal model $(\mathbf{U},\mathbf{V},\mathbf{F})$ and evidence $\mathbf{W}$, where $\mathbf{W} \subset \mathbf{V}$,  the counterfactual inference is the computation of probabilities $P(Y_{\mathbf{A} \leftarrow \mathbf{a}}(\mathbf{U}|\mathbf{W}=\mathbf{w}))$.
	\begin{itemize}
		\item \textbf{Abduction}: for a given prior on $\mathbf{U}$, compute the posterior distribution of $\mathbf{U}$ given the evidence $\mathbf{W}=\mathbf{w}$;
		
		\item \textbf{Action}: substitute the equations for $\mathbf{A}$ with the interventional values $\mathbf{a}$, resulting in the modified set of equations $\mathbf{F_a}$;
		
		\item \textbf{Prediction}: compute the implied distribution on the remaining elements of $\mathbf{V}$ using $\mathbf{F_a}$ and the posterior $P(\mathbf{U}|\mathbf{W}=\mathbf{w})$.
	\end{itemize}
\end{definition}

Following the work in~\cite{kusner2017counterfactual}, the implication of counterfactual fairness is described as follows:
\begin{proposition}[Implication of Counterfactual Fairness\\~\cite{kusner2017counterfactual}]
	\label{def:implication}
	Let $\mathcal{G}$ be the causal graph of the given model $(\mathbf{U},\mathbf{V},\mathbf{F})$. If there exists $\mathbf{W}$ be any non-descendant of $\mathbf{A}$, then downstream predictor $\widehat{Y}(\mathbf{W})$ will be counterfactually fair.
\end{proposition}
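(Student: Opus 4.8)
The plan is to derive counterfactual fairness of $\widehat{Y}(\mathbf{W})$ directly from Definition~\ref{def:Counterfactual} by running the three-step counterfactual inference of Definition~\ref{def:inference} with evidence $\mathbf{X}=\mathbf{x},\mathbf{A}=\mathbf{a}$. The guiding intuition is that, since $\mathbf{W}$ consists of non-descendants of $\mathbf{A}$ in $\mathcal{G}$, an intervention on $\mathbf{A}$ cannot reach $\mathbf{W}$, so the potential-outcome value $\mathbf{W}_{\mathbf{A}\leftarrow\mathbf{a}}(\mathbf{U})$ is in fact independent of the interventional assignment; because $\widehat{Y}$ is a fixed deterministic map of $\mathbf{W}$, the same invariance is inherited by $\widehat{Y}_{\mathbf{A}\leftarrow\mathbf{a}}(\mathbf{U})$, and the two conditional laws in the definition must then agree.

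In the abduction step I would fix the context $\mathbf{X}=\mathbf{x},\mathbf{A}=\mathbf{a}$ and form the posterior $P(\mathbf{U}\mid\mathbf{X}=\mathbf{x},\mathbf{A}=\mathbf{a})$; note this posterior is common to both sides of the target equation, so it merely needs to be well-defined and is not what separates the factual from the counterfactual world. In the action step I would replace the equation for $\mathbf{A}$ by the constant $\mathbf{a}$ (to get $\mathbf{F}_{\mathbf{a}}$) on the left-hand side and by the constant $\bar{\mathbf{a}}$ (to get $\mathbf{F}_{\bar{\mathbf{a}}}$) on the right-hand side, leaving every other structural equation intact.

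The prediction step is where the hypothesis on $\mathbf{W}$ does its work, and I expect it to be the only real obstacle: I need to show that for every realization $\mathbf{u}$ of $\mathbf{U}$, the value of $\mathbf{W}$ solved from $\mathbf{F}_{\mathbf{a}}$ equals the value solved from $\mathbf{F}_{\bar{\mathbf{a}}}$. I would argue this by induction along a topological order of $\mathcal{G}$ restricted to the ancestral set of $\mathbf{W}$. For a source node of this set, its value depends on $\mathbf{U}$ alone and is therefore unchanged. For a non-source node $W_i$, the key point is that because $W_i$ is a non-descendant of $\mathbf{A}$, no component of $\mathbf{A}$ appears in $pa_i$ and every parent in $pa_i$ is itself a non-descendant of $\mathbf{A}$ (an ancestor of a non-descendant of $\mathbf{A}$ is again a non-descendant of $\mathbf{A}$); hence the equation $W_i = f_i(pa_i, U_{pa_i})$ is untouched by either intervention, and by the induction hypothesis its parent values coincide in the two worlds, so $W_i$ coincides as well. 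Getting the ancestral-closure and the ``$\mathbf{A}$ not among the parents'' bookkeeping exactly right is the delicate part.

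Combining the pieces, for every $\mathbf{u}$ we obtain $\widehat{Y}_{\mathbf{A}\leftarrow\mathbf{a}}(\mathbf{u}) = \widehat{Y}(\mathbf{W}_{\mathbf{A}\leftarrow\mathbf{a}}(\mathbf{u})) = \widehat{Y}(\mathbf{W}_{\mathbf{A}\leftarrow\bar{\mathbf{a}}}(\mathbf{u})) = \widehat{Y}_{\mathbf{A}\leftarrow\bar{\mathbf{a}}}(\mathbf{u})$, using only that $\widehat{Y}$ is a function of $\mathbf{W}$. Taking probabilities of the event $\{\,\cdot\, = y\}$ under the common posterior $P(\mathbf{U}\mid\mathbf{X}=\mathbf{x},\mathbf{A}=\mathbf{a})$ from the abduction step yields $P(\widehat{Y}_{\mathbf{A}\leftarrow\mathbf{a}}(\mathbf{U})=y \mid \mathbf{X}=\mathbf{x},\mathbf{A}=\mathbf{a}) = P(\widehat{Y}_{\mathbf{A}\leftarrow\bar{\mathbf{a}}}(\mathbf{U})=y \mid \mathbf{X}=\mathbf{x},\mathbf{A}=\mathbf{a})$ for all $y$ and all attainable $\bar{\mathbf{a}}$, which is precisely the condition in Definition~\ref{def:Counterfactual}. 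The remaining concerns are routine — in particular that the conditioning events have positive probability so that the posteriors are meaningful.
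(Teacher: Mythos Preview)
Your argument is correct and is essentially the standard proof of this result: fix the posterior $P(\mathbf{U}\mid\mathbf{X}=\mathbf{x},\mathbf{A}=\mathbf{a})$ once, and then show by induction along a topological order that every non-descendant of $\mathbf{A}$ has the same realised value under $\mathbf{F}_{\mathbf{a}}$ and $\mathbf{F}_{\bar{\mathbf{a}}}$, because its structural equation is untouched by the intervention and its parents are themselves non-descendants of $\mathbf{A}$. The pointwise identity $\widehat{Y}_{\mathbf{A}\leftarrow\mathbf{a}}(\mathbf{u})=\widehat{Y}_{\mathbf{A}\leftarrow\bar{\mathbf{a}}}(\mathbf{u})$ then yields equality of the two conditional probabilities.

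However, there is no proof in the paper to compare against: the proposition is stated as a cited result from Kusner et al.\ and is invoked without argument. The only proof in the paper that touches on this issue is that of Theorem~\ref{theo:002}, which takes a rather different route --- it enumerates the paths between $\mathbf{A}$ and $\mathbf{Z}_{\mathbf{x}}'$ in a specific template graph and checks that each one is blocked by a collider, thereby concluding that $\mathbf{Z}_{\mathbf{x}}'$ is a non-descendant of $\mathbf{A}$ and then appealing to the proposition. Your structural-equation invariance argument is more self-contained and general (it works for any $\mathcal{G}$ and any non-descendant set $\mathbf{W}$, with no path enumeration), whereas the paper's $d$-separation style argument is tailored to a particular graph and still relies on the proposition as a black box. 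In short: your proof supplies what the paper simply imports.
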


We extend Proposition~\ref{def:implication} to the fair representation learning and present the following theorem. We follow the similar proof process in work~\cite{kusner2017counterfactual} to prove this theorem.
\begin{theorem}
	\label{theo:002}
	Given the causal graph $\mathcal{G}$, $\mathbf{Z_a}$ is the representation of sensitive attributes $\mathbf{A}$, $\mathbf{Z^{'}_x}$ is the structured representation of the other observed attributes $\mathbf{X}$ with respect to the conceptual level causal graph $\mathcal{G}_c$. We have $\widehat{Y}(\mathbf{Z^{'}_x})$ satisfy counterfactual fairness. 
\end{theorem}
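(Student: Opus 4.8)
The plan is to reduce the claim to Proposition~\ref{def:implication} by showing that the structured representation $\mathbf{Z^{'}_x}$ can be taken to be a non-descendant of $\mathbf{A}$ in (an augmented version of) the causal graph $\mathcal{G}$. Concretely, I would first make explicit the generative picture implied by the VAE construction: we posit latent variables $\mathbf{Z_a}$ and $\mathbf{Z^{'}_x}$ together with the observed set $\mathbf{V}=\{\mathbf{A},\mathbf{X}\}$, where $\mathbf{Z_a}$ is a function of $\mathbf{A}$ (and some background noise) and $\mathbf{Z^{'}_x}$ is obtained from $\mathbf{X}$ subject to (i) the causal-constraint that its internal structure mirrors $\mathcal{G}_c$ and (ii) the independence/disentanglement constraint that removes the sensitive information flowing along $\mathbf{A}\to\mathbf{X}$. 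I would then argue that constraint (ii) is exactly the statement that, in the augmented graph, there is no directed path from $\mathbf{A}$ into $\mathbf{Z^{'}_x}$ — equivalently $\mathbf{Z^{'}_x}$ carries none of the variation in $\mathbf{X}$ that is due to $\mathbf{A}$ — so $\mathbf{Z^{'}_x}$ is a non-descendant of $\mathbf{A}$.

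Next I would run the three-step counterfactual computation of Definition~\ref{def:inference} explicitly for the predictor $\widehat{Y}(\mathbf{Z^{'}_x})$, mirroring the argument of \citet{kusner2017counterfactual}. In the \textbf{abduction} step, the posterior $P(\mathbf{U}\mid\mathbf{X}=\mathbf{x},\mathbf{A}=\mathbf{a})$ is computed. In the \textbf{action} step we replace the structural equations for $\mathbf{A}$ by $\mathbf{A}\leftarrow\mathbf{a}$ and, separately, by $\mathbf{A}\leftarrow\bar{\mathbf{a}}$. In the \textbf{prediction} step, because $\mathbf{Z^{'}_x}$ depends only on non-descendants of $\mathbf{A}$ (its structural equation does not take any component of $\mathbf{A}$, or any descendant of $\mathbf{A}$, as an argument), the intervention on $\mathbf{A}$ does not change the distribution of $\mathbf{Z^{'}_x}$ induced by the posterior over $\mathbf{U}$; hence the induced distribution of $\widehat{Y}(\mathbf{Z^{'}_x})$ is identical under $\mathbf{A}\leftarrow\mathbf{a}$ and $\mathbf{A}\leftarrow\bar{\mathbf{a}}$. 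This gives
\[
P(\widehat{Y}_{\mathbf{A}\leftarrow\mathbf{a}}(\mathbf{U})=y\mid\mathbf{X}=\mathbf{x},\mathbf{A}=\mathbf{a}) = P(\widehat{Y}_{\mathbf{A}\leftarrow\bar{\mathbf{a}}}(\mathbf{U})=y\mid\mathbf{X}=\mathbf{x},\mathbf{A}=\mathbf{a}),
\]
which is exactly Definition~\ref{def:Counterfactual}. Since this holds for every context and every attainable $\bar{\mathbf{a}}$, $\widehat{Y}(\mathbf{Z^{'}_x})$ is counterfactually fair.

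The main obstacle, and the step that needs the most care, is justifying that $\mathbf{Z^{'}_x}$ is genuinely a non-descendant of $\mathbf{A}$ rather than merely statistically decorrelated with it. The disentanglement/independence objective of the VAE enforces an association-level condition, whereas Proposition~\ref{def:implication} is a graphical (structural) condition; bridging this gap requires either an explicit faithfulness-type assumption linking the learned independence to the absence of a directed path, or a direct modelling assumption that the encoder for $\mathbf{Z^{'}_x}$ is defined only on the $\mathbf{A}$-invariant part of $\mathbf{X}$. I would state this bridging assumption cleanly up front (it is implicit in the informal description "$\mathbf{Z^{'}_x}$ contains no sensitive information that transfers through the path $\mathbf{A}\to\mathbf{X}$") and then note that the role of the causal constraint from $\mathcal{G}_c$ is orthogonal to fairness: it only reshapes the internal dependency structure among the components of $\mathbf{Z^{'}_x}$ and never reintroduces a dependence on $\mathbf{A}$, so it does not affect the non-descendant property. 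A secondary, more routine point to verify is that a function of a non-descendant of $\mathbf{A}$ is again a non-descendant of $\mathbf{A}$, so the predictor $\widehat{Y}(\mathbf{Z^{'}_x})$ inherits the property and Proposition~\ref{def:implication} applies directly.
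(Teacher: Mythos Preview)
Your high-level strategy matches the paper's: both reduce the theorem to Proposition~\ref{def:implication} by establishing that $\mathbf{Z^{'}_x}$ is a non-descendant of $\mathbf{A}$ in the relevant causal graph. Where you diverge is in \emph{how} that non-descendant claim is justified. The paper does not argue abstractly from the disentanglement objective; instead it fixes a specific augmented graph (Figure~\ref{pic:proof}) in which $\mathbf{X}$ is partitioned into four subsets $\mathbf{X}^{\mathbf{A}}_{Y},\mathbf{X}^{\mathbf{N}}_{Y},\mathbf{X}^{\mathbf{A}}_{\mathbf{N}},\mathbf{X}^{\mathbf{N}}_{\mathbf{N}}$ according to whether they are descendants of $\mathbf{A}$ and/or parents of $Y$, and in which $\mathbf{Z^{'}_x}$ is placed as a \emph{generative parent} of each subset and of $Y$. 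It then enumerates all seven paths between $\mathbf{A}$ and $\mathbf{Z^{'}_x}$ and checks that each contains a collider ($\mathbf{X}^{\mathbf{A}}_{Y}$, $Y$, or $\mathbf{X}^{\mathbf{A}}_{\mathbf{N}}$), so they are all $d$-separated by the empty set; hence no directed path from $\mathbf{A}$ to $\mathbf{Z^{'}_x}$ exists and Proposition~\ref{def:implication} applies directly. The paper does not spell out the abduction--action--prediction steps as you do, nor does it discuss the role of $\mathcal{G}_c$ in the proof.

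What each approach buys: the paper's path-enumeration argument is concrete and purely graphical, but it quietly assumes the generative orientation $\mathbf{Z^{'}_x}\to\mathbf{X}$ (the phrase ``after perfect representation learning'') to make every $\mathbf{A}$--$\mathbf{Z^{'}_x}$ path hit a collider; this is exactly the bridging assumption you flagged, just encoded structurally rather than stated. Your route is more transparent about that gap---you name the faithfulness/modelling assumption needed to pass from a learned independence to a structural non-descendant claim---and your explicit walk through Definition~\ref{def:inference} makes the counterfactual invariance self-contained rather than deferring to Proposition~\ref{def:implication}. Your observation that the $\mathcal{G}_c$ constraint is orthogonal to the fairness argument is also a point the paper leaves implicit. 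If you want to align with the paper, simply replace your abstract ``no directed path'' argument with the four-way decomposition of $\mathbf{X}$ and the collider-based $d$-separation check; otherwise your version stands as a cleaner, assumption-explicit variant of the same reduction.
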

\begin{proof}
Given the causal graph $\mathcal{G}$ as shown in Figure~\ref{pic:proof}, there is not a parent node of $\mathbf{A}$ in $\mathbf{X}$, and there is not a child node of $Y$ in $\mathbf{X}$. $\mathbf{X}$ contains four subsets: $\mathbf{X}^{\mathbf{A}}_{Y}$ is the subset of other observed attributes that are descendants of $\mathbf{A}$ and parents of $Y$; $\mathbf{X}^{\mathbf{N}}_{Y}$ is the subset of other observed attributes that are only parents of $Y$; $\mathbf{X}^{\mathbf{N}}_{\mathbf{N}}$ is the subset of other observed attributes that are no relationships with $\mathbf{A}$ and $Y$; $\mathbf{X}^{\mathbf{A}}_{\mathbf{N}}$ is the subset of other observed attributes that are only descendants of $\mathbf{A}$. After perfect representation learning, we obtain $\mathbf{Z_a}$ and $\mathbf{Z^{'}_x}$.

We proof that $\mathbf{Z^{'}_x}$ is not the descendant of $\mathbf{A}$ with the following two subsets. For the first subsets $\{\mathbf{X}^{\mathbf{A}}_{Y},\mathbf{X}^{\mathbf{N}}_{Y},\mathbf{X}^{\mathbf{A}}_{\mathbf{N}}\}$, there are seven paths between $\mathbf{A}$ and $\mathbf{Z^{'}_x}$, including $\mathbf{A} \to \mathbf{X}^{\mathbf{A}}_{Y} \gets \mathbf{Z^{'}_x}$, $\mathbf{A} \to \mathbf{X}^{\mathbf{A}}_{Y} \to Y \gets \mathbf{Z^{'}_x}$, $\mathbf{A} \to \mathbf{X}^{\mathbf{A}}_{Y} \to Y \gets \mathbf{X}^{\mathbf{N}}_{Y} \gets \mathbf{Z^{'}_x}$,  $\mathbf{A} \to Y \gets \mathbf{X}^{\mathbf{A}}_{Y} \gets \mathbf{Z^{'}_x}$, $\mathbf{A} \to Y \gets \mathbf{Z^{'}_x}$, $\mathbf{A} \to Y \gets \mathbf{X}^{\mathbf{N}}_{Y} \gets \mathbf{Z^{'}_x}$ and  $\mathbf{A} \to \mathbf{X}^{\mathbf{A}}_{\mathbf{N}} \gets Y$. These seven paths are blocked by $\emptyset$ (i.e., $\mathbf{A}$ and $\mathbf{Z^{'}_x}$ are $d$-separated by $\emptyset$), since each path contains a collider either $\mathbf{X}^{\mathbf{A}}_{Y}$ or $Y$ or $\mathbf{X}^{\mathbf{A}}_{\mathbf{N}}$. For second subset $\mathbf{X}^{\mathbf{N}}_{\mathbf{N}}$, there is no path connecting $\mathbf{X}^{\mathbf{N}}_{\mathbf{N}}$ and $Y$. Hence, $\mathbf{Z^{'}_x}$ is not the descendant of $\mathbf{A}$. Therefore, $\widehat{Y}(\mathbf{Z^{'}_x})$ is counterfactually fair based on Proposition~\ref{def:implication}. 
\end{proof}
\begin{figure}[t]
	\centering
	\includegraphics[scale=0.43]{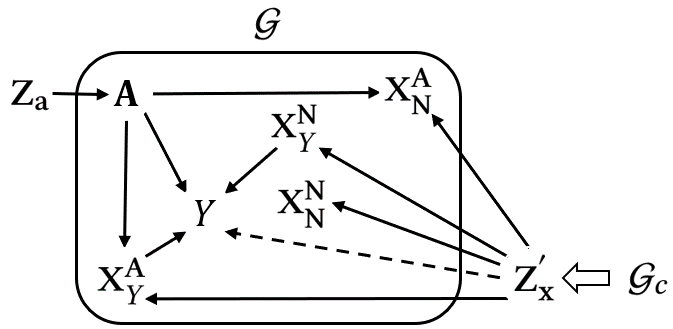}
	\caption{$\mathcal{G}$ is the causal graph that represents the causal relationship between $\mathbf{A}$, $\mathbf{X} = \{\mathbf{X}^{\mathbf{A}}_{Y},\mathbf{X}^{\mathbf{N}}_{Y},\mathbf{X}^{\mathbf{A}}_{\mathbf{N}},\mathbf{X}^{\mathbf{N}}_{\mathbf{N}}\}$ and $Y$. The dotted line represents the prediction process that uses $\mathbf{Z^{'}_x}$.}
	\label{pic:proof}
\end{figure}

We use Figure~\ref{pic:proof} to show whether the following predictors satisfy counterfactual fairness.
	\begin{itemize}
		\item $\widehat{Y}(\mathbf{A},\mathbf{X})$: This model is unfair since it uses sensitive attributes to make prediction.
		\item $\widehat{Y}(\mathbf{X})$: This model satisfies fairness through awareness~\cite{dwork2012fairness} but fails to achieve counterfactual fairness. The predictor $\widehat{Y}(\mathbf{X})$ does not use sensitive attributes explicitly, but it uses $\mathbf{X}^{\mathbf{A}}_{Y}$ and $\mathbf{X}^{\mathbf{A}}_{\mathbf{N}}$ which are the descendants of $\mathbf{A}$.
		\item $\widehat{Y}(\mathbf{Z_a},\mathbf{Z^{'}_x})$: This model is unfair because it uses sensitive attributes for prediction. The reason is that $\mathbf{Z_a}$ is the representation of $\mathbf{A}$, which should be consider as sensitive attributes either.
		\item $\widehat{Y}(\mathbf{X}^{\mathbf{N}}_{Y},\mathbf{X}^{\mathbf{N}}_{\mathbf{N}})$: This model satisfies counterfactual fairness since both $\mathbf{X}^{\mathbf{N}}_{Y}$ and $\mathbf{X}^{\mathbf{N}}_{\mathbf{N}}$ are non-descendants of $\mathbf{A}$. However, this predictor losses a lot of useful information that embeds in other observed attributes, which means it may not achieve an acceptable prediction accuracy.
		\item $\widehat{Y}(\mathbf{Z^{'}_x})$: This model is counterfactually fair based on Theorem~\ref{theo:002} and achieves higher accuracy than $\widehat{Y}(\mathbf{X}^{\mathbf{N}}_{Y},\mathbf{X}^{\mathbf{N}}_{\mathbf{N}})$ as shown in our experiments.
	\end{itemize}

\subsection{CF-VAE}
We first discuss the causal constraints and then explain the loss function of CF-VAE in detail. The architecture of CF-VAE is shown in Figure~\ref{pic:CFVAE1}.
\begin{figure*}[t]
	\centering
	\includegraphics[scale=0.43]{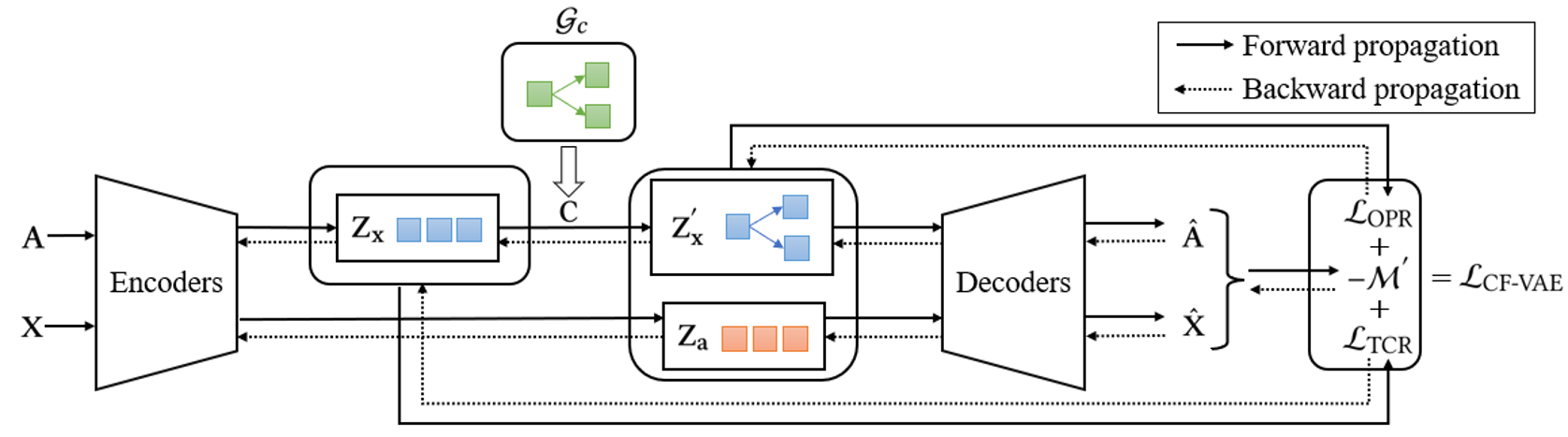}
	\caption{The architecture of CF-VAE. The adjacency Matrix $\mathbf{C}$ is used to construct $\mathbf{Z^{'}_x}$ and is determined by the conceptual level causal graph $\mathcal{G}_c$ with respect to domain knowledge. $\mathcal{L}_{\text{TCR}}$ is used to ensure that each attribute in $\mathbf{Z_x}$ is independent of each other. $\mathcal{L}_{\text{OPR}}$ is used to encourage that $\mathbf{Z_x^{'}}$ do not contain sensitive information. The loss function of CF-VAE is $\mathcal{L}_{\text{CF-VAE}}$.}
	\label{pic:CFVAE1}
\end{figure*} 

\subsubsection{Learning Representations with Causal Constraints}\label{3.2.1}
We aim to retain causal relationships between ``concepts'' through a more easily accessible conceptual level causal graph $\mathcal{G}_c$ and embed these relationships in representations. Following the works in~\cite{ZhengARX18,YangLCSHW21}, we transform these causal relationships in the form of an adjacency matrix $\mathbf{C}$ as causal constraints to construct $\mathbf{Z^{'}_x}$ and feed them to predictive models.

Many researchers study fair decision-making under the method of causality, in which an accessible causal graph is an important assumption. \citet{zhang2017achieving} used the PC algorithm~\cite{spirtes2000causation, kalisch2007estimating} to learn the causal relationships of the dataset itself and used the causal graph to restrict the transfer of sensitive information along the specific paths. Learning causal graphs from observational data has been shown to be feasible in unbiased data, but in fairness computing, the dataset may be biased and the PC algorithm may not be suitable. Other researches~\cite{nabi2018fair,chiappa2019path} assume that the complete version of causal graph is accessible and evolved from domain knowledge. In our paper, we adopt the second approach and further weaken it. We assume that $\mathcal{G}_c$ only covers the relationships between ``concepts'', not all observed attributes, which is easy to obtain consensus from experts.

To formalise causal relationships, we consider $n$ ``concepts'' in the dataset, which means $\mathbf{Z^{'}_x}$ should have the same dimension as ``concepts''. The ``concepts'' in observations are causally structured by $\mathcal{G}_c$ with an adjacency matrix $\mathbf{C}$. For simplicity, in this paper, the causal constraints are exactly implemented by a linear structural equation model as follows:
\begin{equation}
	\begin{aligned}
		\mathbf{Z_x^{'}} = (\mathbf{I}-\mathbf{C}^T)^{-1}\mathbf{Z_x},
	\end{aligned}	
\end{equation}where $\mathbf{I}$ is the identity matrix, $\mathbf{Z_x}$ is obtained from the encoder, $\mathbf{Z_x^{'}}$ is constructed from $\mathbf{Z_x}$ and $\mathbf{C}$. $\mathbf{C}$ is obtained from $\mathcal{G}_c$ with respect to domain knowledge. The parameters in $\mathbf{C}$ indicate that there are corresponding edges, and the values of the parameters indicate the weight of the causal relationships. It is worth noting that if the parameter value is zero, it means that such an edge does not exist, i.e., no causal relationship between these two ``concepts''.

As mentioned above, $\mathbf{Z_x}$ is obtained from the encoder, we cannot guarantee that each attribute inside is independent. To ensure the independence of each attribute in $\mathbf{Z_x}$, we follow the Factor-VAE in work~\cite{kim2018disentangling} and employ the total correction regularisation (TCR) in our loss function. TCR also encourages the correctness of structured $\mathbf{Z_x^{'}}$ with respect to domain knowledge since there are no correlations in $\mathbf{Z_x}$ before adding causal constraints. The TCR for our proposed CF-VAE is defined as:
\begin{equation}
	\begin{aligned}
		\mathcal{L}_{\text{TCR}} = \gamma D_{KL}[q(\mathbf{Z_x})||\prod_{i=1}^{D_{\mathbf{Z_x}}} q(Z_{\mathbf{x}_i})],
	\end{aligned}	
\end{equation}where $\gamma$ is the weight value, $D_{\mathbf{Z_x}}$ is dimension of $\mathbf{Z_x}$.

\subsubsection{Learning Strategy} \label{3.2.2}We first explain the architecture of CF-VAE without using causal constraints. Then, we add causal constraints and orthogonality promoting regularisation (OPR) to obtain the loss function of CF-VAE.

In the inference model, the variational approximations of the posteriors are defined as:
\begin{equation}
	\begin{aligned}
		&q(\mathbf{Z_a}|\mathbf{A}) = \prod_{i=1}^{D_{\mathbf{Z_a}}}\mathcal{N}(\mu = \hat{\mu}_{Z_{\mathbf{a}_i}}, \sigma^2 = \hat{\sigma}^2_{Z_{\mathbf{a}_i}});\\
		&q(\mathbf{Z_x}|\mathbf{X}) = \prod_{i=1}^{D_{\mathbf{Z_x}}}\mathcal{N}(\mu = \hat{\mu}_{Z_{\mathbf{x}_i}}, \sigma^2 = \hat{\sigma}^2_{Z_{\mathbf{x}_i}}),
	\end{aligned}	
\end{equation} where $\hat{\mu}_{a_i},\hat{\mu}_{x_i}$ and $\hat{\sigma}^2_{a_i},\hat{\sigma}^2_{x_i}$ are the means and variances of the Gaussian distributions parameterised by neural networks.

The generative model for $\mathbf{A}$ and $\mathbf{X}$ are defined as:
\begin{equation}
	\begin{aligned}
		p(\mathbf{A}|\mathbf{Z_a}) = \prod_{i=1}^{D_{\mathbf{A}}} p(A_i|\mathbf{Z_a});~p(\mathbf{X}|\mathbf{Z_x}) = \prod_{i=1}^{D_{\mathbf{X}}} p(X_i|\mathbf{Z_x}),
	\end{aligned}	
\end{equation} where $D_{\mathbf{A}}$ and $D_{\mathbf{X}}$ are dimensions of $\mathbf{A}$ and $\mathbf{X}$.

Following the setting in VAE~\cite{KingmaW13}, we choose Gaussian distribution as prior distributions, which are defined as:
\begin{equation}
	\begin{aligned}
		p(\mathbf{Z_a}) = \prod_{i=1}^{D_{\mathbf{Z_a}}}\mathcal{N}(Z_{\mathbf{a}_i} | 0,1);~p(\mathbf{Z_x}) = \prod_{i=1}^{D_{\mathbf{Z_x}}} \mathcal{N}(Z_{\mathbf{x}_i} | 0,1).
	\end{aligned}	
\end{equation}

Given the training samples, the parameters can be optimised by maximising the following ELBO:

\begin{equation}
	\begin{aligned}
		\mathcal{M} =~ &\mathbb{E}_{q(\mathbf{Z_a}|\mathbf{A})}[\log p(\mathbf{A}|\mathbf{Z_a})] + \mathbb{E}_{q(\mathbf{Z_x}|\mathbf{X})}[\log p(\mathbf{X}|\mathbf{Z_x})] \\&- D_{KL}[q(\mathbf{Z_a}|\mathbf{A})||p(\mathbf{Z_a})] - D_{KL}[q(\mathbf{Z_x}|\mathbf{X})||p(\mathbf{Z_x})].
		\label{eqa:ELBO1}
	\end{aligned}	
\end{equation}

We note that Equation~\ref{eqa:ELBO1} is not under causal constraints and still using $\mathbf{Z_x}$ to optimise. The $\mathcal{M}$ comprises four terms: the first and second term denote the reconstruction loss between the original $\{\mathbf{A}, \mathbf{X}\}$ and $\{\hat{\mathbf{A}},\hat{\mathbf{X}}\}$; the third term and the fourth term are used for calculating the distribution distance between the prior knowledge and the latent representations that we obtained. 

We follow Section~\ref{3.2.1} and add causal constraints in Equation~\ref{eqa:ELBO1}. The updated ELBO is defined as:
\begin{equation}
	\begin{aligned}
		\mathcal{M}^{'} =~ &\mathbb{E}_{q(\mathbf{Z_a}|\mathbf{A})}[\log p(\mathbf{A}|\mathbf{Z_a})] + \mathbb{E}_{q(\mathbf{Z_x^{'}}|\mathbf{X})}[\log p(\mathbf{X}|\mathbf{Z_x^{'}})] \\&- D_{KL}[q(\mathbf{Z_a}|\mathbf{A})||p(\mathbf{Z_a})] - D_{KL}[q(\mathbf{Z_x^{'}}|\mathbf{X})||p(\mathbf{Z_x^{'}})],
	\end{aligned}	
	\label{eqa:ELBO2}
\end{equation}where 
\begin{equation*}
	\begin{aligned}
		&p(\mathbf{Z_x^{'}}) = (\mathbf{I}-\mathbf{C}^T)^{-1}p(\mathbf{Z_x}); ~p(\mathbf{X}|\mathbf{Z^{'}_x}) = \prod_{i=1}^{D_{\mathbf{X}}} p(X_i |\mathbf{Z^{'}_x});\\
		&q(\mathbf{Z^{'}_x}|\mathbf{X}) = \prod_{i=1}^{D_{\mathbf{Z^{'}_x}}} \mathcal{N}(\mu = \hat{\mu}_{Z^{'}_{\mathbf{x}_i}}, \sigma^2 = \hat{\sigma}^2_{Z^{'}_{\mathbf{x}_i}}).
	\end{aligned}
\end{equation*}

We introduce orthogonality to encourage disentanglement between $\mathbf{Z_a}$ and $\mathbf{Z_x^{'}}$. Following the work in \cite{yu2011diversity}, we employ orthogonality promoting regularisation based on the pairwise cosine similarity among latent representations: if the cosine similarity is close to zero, then the latent representations are closer to being orthogonal and independent. The cosine similarity (CS) is defined as:
\begin{equation}
	\begin{aligned}
		CS(\mathbf{E_1}, \mathbf{E_2}) =  \frac{\mathbf{E_1}^{T} \mathbf{E_2}}{\|\mathbf{E_1}\|_2~ \|\mathbf{E_2}\|_2},
	\end{aligned}	
\end{equation}where $\|\mathbf{\cdot}\|_2$ is the $\mathit{l}_2$ norm.

To encourage orthogonality between two vectors $\mathbf{E_1}$ and $\mathbf{E_2}$, we can make their inner product $\mathbf{E_1}^{T} \mathbf{E_2}$ close to zero and their $\mathit{l}_2$ norm $\|\mathbf{E_1}\|_2$, $\|\mathbf{E_2}\|_2$ close to one~\cite{xie2018orthogonality}. The orthogonality promoting regularisation (OPR) for our proposed CF-VAE is defined as:
\begin{equation}
	\begin{aligned}
		\mathcal{L}_{\text{OPR}} = \frac{1}{B}\sum_{i=1}^{B}CS(\mathbf{Z_{a_{\mathit{i}}}}, \mathbf{Z_{x_{\mathit{i}}}^{'}}),
	\end{aligned}	
\end{equation}where $B$ denotes the batch size for neural network. 

In conclusion, the loss function of our proposed CF-VAE is defined as:
\begin{equation}
	\begin{aligned}
		\mathcal{L}_{\text{CF-VAE}} = -\mathcal{M}^{'} + \mathcal{L}_{\text{TCR}} + \mathcal{L}_{\text{OPR}}.
	\end{aligned}	
	\label{eq:CF-VAE}
\end{equation}

\section{Experiments}
In this section, we conduct extensive experiments to evaluate CF-VAE on real-world datasets. Before showing the detailed results, we first present the details of selected methods and the evaluation metrics.

\begin{figure*}[t]
	\begin{subfigure}[b]{0.4090\textwidth}
		\raggedright
		\includegraphics[scale=0.46]{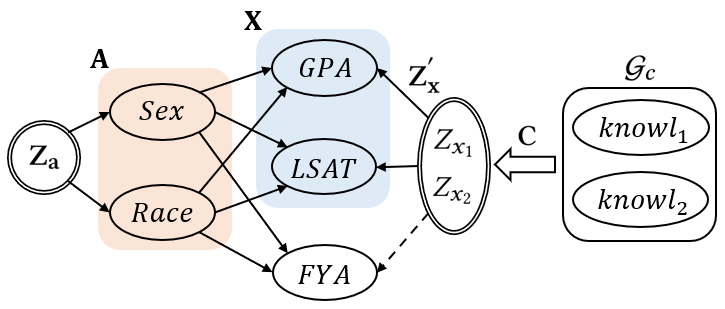}
		\caption{}
		\label{pic:law}
	\end{subfigure}
	\begin{subfigure}[b]{0.5401\textwidth}
		\raggedleft
		\includegraphics[scale=0.46]{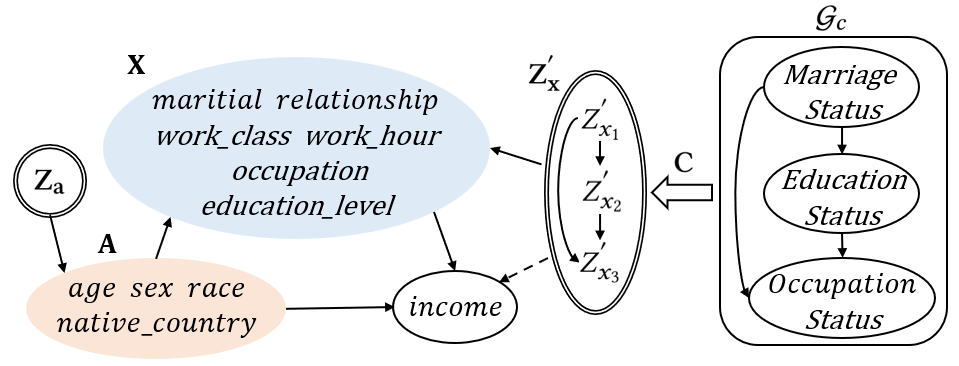}
		\caption{}
		\label{pic:adult1}
	\end{subfigure}
	\caption{(a) The process of CF-VAE for Law school dataset. (b) The process of CF-VAE for Adult dataset. $\mathbf{Z_a}$ is the representation of $\mathbf{A}$; $\mathbf{Z_x^{'}}$ is the structured representation of $\mathbf{X}$. The adjacency matrix $\mathbf{C}$ is used to construct $\mathbf{Z_x^{'}}$ with respect to $\mathcal{G}_c$. The dotted line represents the prediction process of $Y$ that uses $\mathbf{Z_x^{'}}$.}
	\label{pic:experiment}
\end{figure*}

\subsection{Framework Comparison}
The proposed CF-VAE is considered as a pre-processing technique to address fairness issues since it obtains structured representations for downstream predictive models to achieve counterfactual fairness. Hence, we compare CF-VAE with traditional and VAE-based pre-processing methods. For traditional methods, we select baselines including ReWeighting (RW)~\cite{kamiran2012data}, Disparate Impart Remover (DIR)~\cite{feldman2015certifying} and Optimized Preprocessing (OP)~\cite{CalmonWVRV17}. Both of them are available in AIF360~\cite{bellamy2019ai}. For VAE-based methods, we compare with VFAE~\cite{LouizosSLWZ15} and FFVAE~\cite{creager2019flexibly}. Both of them are implemented in Pytorch~\cite{NEURIPS2019_9015}. We also obtain the Full model for comparison, which uses all attributes in the dataset to make predictions.

We do not choose the basic VAE (e.g., VAE~\cite{KingmaW13}, $\beta$-VAE~\cite{HigginsMPBGBML17}, and Factor-VAE~\cite{kim2018disentangling}) for comparison in this experiment, since they are not optimised for fairness problems. In addition, we do not use VAE-based inference models~\cite{chiappa2019path, SarhanNEA20, KimSJSJKM21,YangLCSHW21} for comparison, because the purpose of these inference models is to generate counterfactual data or to estimate effects, which is different from our goals.

We select several well-known predictive models to simulate the downstream prediction process. Linear Regression ($\mathrm{LR_{R}}$), Stochastic Gradient Descent Regression ($\mathrm{SGD_{R}}$) and Multi-layer Perceptron Regression ($\mathrm{MLP_{R}}$) are used for regression tasks; Logistic Regression ($\mathrm{LR_{C}}$), Stochastic Gradient Descent Classification ($\mathrm{SGD_{C}}$) and Multi-layer Perceptron Classification ($\mathrm{MLP_{C}}$) are used for classification tasks. For each predictive model, we run 10 times and record the mean and error of the results for evaluation metrics, which are explained in detail in Section~\ref{Metrics}.

\subsection{Evaluation Metrics}\label{Metrics}
\subsubsection{Fairness}
There are no metrics to quantify counterfactual fairness since we can only obtain real-world data. We propose the situation test to measure fairness for different predictive models. The situation test has already been widely used in United States to detect individual discrimination~\cite{bendick2007situation}. In our experiment, we construct a matched pair for each individual by inverting the values of sensitive attributes. We take this matched pair as the input to the predictive model, and the predictive model is fair if the predictions of the matched pair are the same as the original pair.

We define unfairness score (UFS) to measure the result of the situation test. Specifically, the form of score differs for different predictive models. For regression tasks, we define $\mathrm{UFS_{R}}$ that measure the bias between prediction results for the matched pair and the original pair; For classification tasks, $\mathrm{UFS_{C}}$ is defined as how many individuals' prediction results are changed after intervening the values of sensitive attributes. $\mathrm{UFS_{R}}$ and $\mathrm{UFS_{C}}$ are described as follows:
\begin{equation}
	\begin{aligned}
		&\mathrm{UFS_{R}} = \sqrt{\frac{1}{N} \sum_{i=1}^{N}\Big ( \widehat{Y}_{\mathbf{A}\leftarrow \mathbf{a}}(\mathbf{Z^{'}_{x_{\mathit{i}}}}) - \widehat{Y}_{\mathbf{A}\leftarrow \bar{\mathbf{a}}}(\mathbf{Z^{'}_{x_{\mathit{i}}}}) \Big)^{2}}~;\\&\mathrm{UFS_{C}} = \frac{1}{N} \sum_{i=1}^{N} \mathrm{xor}\Big(\widehat{Y}_{\mathbf{A}\leftarrow \mathbf{a}}(\mathbf{Z^{'}_{x_{\mathit{i}}}}), \widehat{Y}_{\mathbf{A}\leftarrow \bar{\mathbf{a}}}(\mathbf{Z^{'}_{x_{\mathit{i}}}})\Big),
	\end{aligned}	
\end{equation}where $N$ is the number of samples for evaluation.

The lower $\mathrm{UFS}$ value means that the predictive models achieve higher individual fairness.

\subsubsection{Accuracy} 
We evaluate the performance on prediction with the following metrics. For regression tasks, we use Root Mean Square Error (RMSE) to compare the error between prediction results and target attributes' values. For classification tasks, we use accuracy to evaluate various predictive models.

\begin{table*}[t]
	\caption{The results for Law School dataset. The best fairness aware $\mathrm{RMSE}$ and the best $\mathrm{UFS_{R}}$ are shown in bold.}
	\begin{tabular}{ccccccccccc}
		\toprule
		\multirow{2}{*}{\quad Model \quad} &  & \multicolumn{3}{c}{Accuracy ($\mathrm{RMSE}$) $\downarrow$} &  & \multicolumn{3}{c}{Fairness ($\mathrm{UFS_{R}}$) $\downarrow$} \\ \cline{3-5} \cline{7-9}  
		& & $\mathrm{LR_{R}}$ & $\mathrm{SGD_{R}}$ & $\mathrm{MLP_{R}}$ & &  $\mathrm{LR_{R}}$ & $\mathrm{SGD_{R}}$ & $\mathrm{MLP_{R}}$ \\ \cline{1-1} \cline{3-5} \cline{7-9} 
		Full &  & 0.865 ± 0.007 & 0.867 ± 0.007 & 0.865 ± 0.007 &  & 0.660 ± 0.019 & 0.762 ± 0.019 & 0.760 ± 0.045 \\ \cline{1-1} \cline{3-5} \cline{7-9}
		RW &  & 0.955 ± 0.013 & 0.956 ± 0.012 & 0.953 ± 0.012 &  & 0.067 ± 0.002 & 0.067 ± 0.001 & 0.079 ± 0.003 \\
		DIR &  & 0.943 ± 0.009 & 0.944 ± 0.009 & 0.941 ± 0.010 &  & 0.060 ± 0.001 & 0.060 ± 0.001 & 0.070 ± 0.002 \\
		OP &  & 0.959 ± 0.011 & 0.960 ± 0.011 & 0.956 ± 0.010 &  & 0.047 ± 0.001 & 0.046 ± 0.001 & 0.055 ± 0.003 \\
		VFAE &  & 0.932 ± 0.007 & 0.933 ± 0.007 & 0.934 ± 0.007 &  & 0.035 ± 0.010 & 0.074 ± 0.017 & 0.096 ± 0.010 \\
		FFVAE &  & 0.933 ± 0.005 & 0.934 ± 0.004 & 0.935 ± 0.005 &  & 0.032 ± 0.007 & 0.060 ± 0.022 & 0.097 ± 0.008 \\ 
		CF-VAE &  & \textbf{0.931 ± 0.006} & \textbf{0.932 ± 0.006} & \textbf{0.932 ± 0.006} &  & \textbf{0.013 ± 0.006} & \textbf{0.025 ± 0.011} & \textbf{0.044 ± 0.006} \\\bottomrule
	\end{tabular}
	\label{tab:law}
\end{table*}

\begin{table*}[t]
	\caption{The results for Adult dataset. The best fairness aware accuracy and the best $\mathrm{UFS_{C}}$ are shown in bold.}
	\begin{tabular}{ccccccccccc}
		\toprule
		\multirow{2}{*}{\quad Model \quad} &  & \multicolumn{3}{c}{Accuracy $\uparrow$} &  & \multicolumn{3}{c}{Fairness ($\mathrm{UFS_{C}}$) $\downarrow$} \\ \cline{3-5} \cline{7-9}  
		& & $\mathrm{LR_{C}}$ & $\mathrm{SGD_{C}}$ & $\mathrm{MLP_{C}}$ & &  $\mathrm{LR_{C}}$ & $\mathrm{SGD_{C}}$ & $\mathrm{MLP_{C}}$ \\ \cline{1-1} \cline{3-5} \cline{7-9} 
		Full &  & 0.802 ± 0.002 & 0.803 ± 0.004 & 0.831 ± 0.004 &  & 0.068 ± 0.003 & 0.060 ± 0.018 & 0.034 ± 0.009 \\ \cline{1-1} \cline{3-5} \cline{7-9}
		RW &  & 0.797 ± 0.001 & 0.792 ± 0.002 & 0.819 ± 0.001 &  & 0.038 ± 0.001 & 0.029 ± 0.002 & 0.052 ± 0.001 \\
		DIR &  & 0.800 ± 0.001 & 0.793 ± 0.003 & 0.817 ± 0.001 &  & 0.035 ± 0.001 & 0.027 ± 0.002 & 0.046 ± 0.001 \\
		OP &  & 0.780 ± 0.002 & 0.779 ± 0.003 & 0.783 ± 0.002 &  & 0.032 ± 0.003 & 0.030 ± 0.004 & 0.033 ± 0.005 \\
		VFAE &  & 0.785 ± 0.001 & 0.781 ± 0.003 & 0.819 ± 0.004 &  & 0.062 ± 0.002 & 0.041 ± 0.010 & 0.025 ± 0.003 \\
		FFVAE &  & 0.785 ± 0.003 & 0.782 ± 0.001 & 0.814 ± 0.005 &  & 0.062 ± 0.001 & 0.044 ± 0.010 & 0.032 ± 0.010 \\ 
		CF-VAE &  & \textbf{0.801 ± 0.002} & \textbf{0.794 ± 0.004} & \textbf{0.820 ± 0.002} &  & \textbf{0.031 ± 0.002} & \textbf{0.020 ± 0.006} & \textbf{0.024 ± 0.004} \\\bottomrule
	\end{tabular}
	\label{tab:adult}
\end{table*}

\subsection{Law School}
The law school dataset comes from a survey~\cite{wightman1998lsac} of admissions information from 163 law schools in the United States. It contains information of 21,790 law students, including their entrance exam scores (LSAT), their grade point average (GPA) collected prior to law school, and their first-year average grade (FYA). The school expects to predict if the applicants will have a high FYA. Gender and race are sensitive attributes in this dataset, and the school also wants to ensure that predictions are not affected by sensitive attributes. However, LSAT, GPA and FYA scores may be biased due to socio-environmental factors. The process of CF-VAE for the Law school dataset is shown in Figure~\ref{pic:law}.

\subsubsection{Implementation Details} We divide the Law school dataset into 70\% training set for training the representation models, 30\% testing set for evaluating the accuracy of the predictive models, and inverting the values of sensitive attributes in the testing set to generate the auditing set for evaluating the fairness of the predictive models. 

We use the same $\mathcal{G}_c$ as shown in work~\cite{kusner2017counterfactual} to model latent ``concepts'' of $GPA$ and $LSAT$. Since $knowl_{1}$ and $knowl_{2}$ have no causal relationship, the parameters in adjacency matrix $\mathbf{C}$ are set to zero. As a results, we set the {\small $D_{\mathbf{Z^{'}_x}}$}= 2 and set the weight value in $\mathcal{L}_{\text{TCR}}$ as $\gamma = 10$.

\subsubsection{Fairness} The purpose is to demonstrate our method can achieve better fairness performance than other VAE-based methods. As shown in Table~\ref{tab:law}, since the Full model uses sensitive attributes to make predictions, inverting sensitive attributes has the highest impact on the individual's prediction results, which means that the model is unfair. RW, DIR and OP achieves fair predictions by modifying the dataset compared to the Full model. Both VFAE and FFVAE disentangle the sensitive attributes with latent representations, so the influence of inverting the sensitive attributes on the prediction results is small. Our method achieves the lowest $\mathrm{UFS_{R}}$, $0.013$, $0.025$, and $0.044$ for $\mathrm{LR_{R}}$, $\mathrm{SGD_{R}}$, and $\mathrm{MLP_{R}}$ respectively, which means CF-VAE disentangle $\mathbf{Z}^{'}_x$ and  $\mathbf{Z_a}$ more precisely. 

\subsubsection{Accuracy} The accuracy results are shown in Table~\ref{tab:law}. The Full model is unfair and it uses sensitive information to more accurately predict FYA and thus achieves the highest accuracy. The proposed CF-VAE achieves the best fairness aware accuracy in all predictive models than other methods. Our method not only achieves counterfactual fairness for downstream predictors but also flexible for choosing predictive models. 

\subsection{Adult}
The Adult dataset comes from the UCI repository~\cite{asuncion2007uci} contains 14 attributes including race, age, education information, marital information as well as capital gain and loss for 48,842 individuals. The process of CF-VAE is shown in Figure~\ref{pic:adult1}.

\subsubsection{Implementation Details}
We pre-process the dataset by deleting missing information and encoding discrete attributes. After that, we get 45,222 individuals and the downstream tasks' goal is to predict whether the individual's income is above \$50,000. We set $race$, $age$, $sex$ and $native~country$ as $\mathbf{A}$; $maritial$, $relationship$, $work~class$, $work~hour$, $occupation$ and $education~level$ as $\mathbf{X}$. We divide the Adult dataset into 70\% training set for training representation models, 30\% testing set for evaluating the accuracy of the predictive models, and select $10,000$ individuals with female that income below 50K as the auditing set.

We use the same $\mathcal{G}_c$ as shown in previous research~\cite{nabi2018fair,chiappa2019path} to model the latent ``concepts''. We set the {\small $D_{\mathbf{Z^{'}_x}}$}= 3 and set the weight value in $\mathcal{L}_{\text{TCR}}$ as $\gamma = 10$. The adjacency matrix $\mathbf{C}$ is defined as:
\begin{equation}
	\begin{aligned}
		\mathbf{C} = 
		\left|                
		\begin{array}{ccc}   
			0&\lambda_{12}&\lambda_{13}\\  
			0&0&\lambda_{23}\\ 
			0&0&0\\  
		\end{array}
		\right|
	\end{aligned}	
\end{equation} 

Then, we construct $\mathbf{Z_x^{'}}$ from $\mathbf{Z_x}$ and $\mathbf{C}$ as follows:
\begin{equation}
	\begin{aligned}
		&Z_{x_1}^{'} = Z_{x_1};~Z_{x_2}^{'} = \lambda_{12}Z_{x_1} + Z_{x_2};\\
		&Z_{x_3}^{'} = \lambda_{13}Z_{x_1} + \lambda_{23}Z_{x_2} + Z_{x_3}.
	\end{aligned}	
\end{equation}

We set parameter $\{\lambda_{12} = 1,\lambda_{13} = 1,\lambda_{23} = 1\}$ to denote that edges within latent representations, i.e., {\small $Z_{x_1}^{'} \to Z_{x_2}^{'}, Z_{x_1}^{'} \to Z_{x_3}^{'}, Z_{x_2}^{'} \to Z_{x_3}^{'}$}.

\subsubsection{Fairness}
The fairness results are shown in Table~\ref{tab:adult}, the Full model achieves the worst $\mathrm{UFS_{C}}$, since it use $\mathbf{A}$ to predict $income$. Both baseline fairness models and other VAE-based methods improve fairness to a certain extent. The proposed CF-VAE achieves the best $\mathrm{UFS_{C}}$, only $3.1\%$, $2.0\%$ and $2.4\%$ of individuals' results are affected by sensitive attributes' values inversions in $\mathrm{LR_{C}}$, $\mathrm{SGD_{C}}$ and $\mathrm{MLP_{C}}$, respectively. Our method achieves better fairness performance than other methods, since it remains causal relationships in latent representations with respect to $\mathcal{G}_c$ and disentangles structured representations with sensitive attributes.

\begin{table*}[t]
	\caption{The results of ablation study. The Full model and VFAE are shown in the first two rows. The third row is the method without causal constraints. The fourth row is the method without employing $\text{OPR}$. Our proposed CF-VAE is shown in the last row. The best fairness aware $\mathrm{RMSE}$ and the best $\mathrm{UFS_{R}}$ are shown in bold, and the runner-up results are underlined.}
	\begin{tabular}{ccccccccccc}
		\toprule
		\multirow{2}{*}{Loss function} &  & \multicolumn{3}{c}{Accuracy ($\mathrm{RMSE}$) $\downarrow$} &  & \multicolumn{3}{c}{Fairness ($\mathrm{UFS_{R}}$) $\downarrow$} \\ \cline{3-5} \cline{7-9}  
		& & $\mathrm{LR_{R}}$ & $\mathrm{SGD_{R}}$ & $\mathrm{MLP_{R}}$ & &  $\mathrm{LR_{R}}$ & $\mathrm{SGD_{R}}$ & $\mathrm{MLP_{R}}$ \\ \cline{1-1} \cline{3-5} \cline{7-9} 
		- &  & 0.078 ± 0.001 & 0.081 ± 0.001 & 0.081 ± 0.001 &  & 0.102 ± 0.001 & 0.098 ± 0.001 & 0.106 ± 0.002 \\ \cline{1-1} \cline{3-5} \cline{7-9}
		$-\mathcal{M}_{\text{VFAE}}$ &  & 0.126 ± 0.002 & 0.126 ± 0.002 & 0.145 ± 0.002 &  & 0.006 ± 0.001 & 0.010 ± 0.002 & 0.104 ± 0.005 \\
		$-\mathcal{M}$ &  & 0.125 ± 0.001 & 0.125 ± 0.001 & 0.145 ± 0.001 &  & 0.007 ± 0.001 & 0.011 ± 0.003 & 0.105 ± 0.003 \\
		$-\mathcal{M}^{'} + \mathcal{L}_{\text{TCR}}$ &  & \textbf{0.109 ± 0.001} & \underline{0.111 ± 0.001} & \underline{0.122 ± 0.002} &  & \underline{0.003 ± 0.001} & \textbf{0.004 ± 0.002} & \underline{0.071 ± 0.002} \\
		$-\mathcal{M}^{'} + \mathcal{L}_{\text{TCR}} + \mathcal{L}_{\text{OPR}}$ &  & \textbf{0.109 ± 0.001} & \textbf{0.110 ± 0.001} & \textbf{0.121 ± 0.001} &  & \textbf{0.002 ± 0.001} & \underline{0.005 ± 0.002} & \textbf{0.070 ± 0.002} \\ \bottomrule
	\end{tabular}
	\label{tab:Abla}
\end{table*}

\subsubsection{Accuracy}
The Full model uses all observed attributes for predictions. It is worth noting that the Full model does not achieve the 85\% accuracy shown in~\cite{asuncion2007uci}, because we omit capital gain and loss, and achieve similar accuracy as shown in the work~\cite{nabi2018fair}. 

The accuracy results are shown in Table~\ref{tab:adult}. In order to achieve fairness, VFAE and FFVAE lose about 2\% of their accuracy performance. RW, DIR and OP modify the dataset resulting in a loss of predictive performance. The proposed CF-VAE not only guarantees the fairness performance but also retains the causal relationships to improve accuracy. CF-VAE loses less information than other VAE-base methods and achieves the best fairness aware accuracy performance in all predictive models, i.e., $80.1\%$, $79.4\%$ and $82.0\%$ in $\mathrm{LR_{C}}$, $\mathrm{SGD_{C}}$ and $\mathrm{MLP_{C}}$, respectively.

\subsection{Ablation Study}
We follow the same procedure in~\cite{cheng2022toward} to generate synthetic datasets and conduct an ablation study to validate the contribution of each component in our method as shown in Table~\ref{tab:Abla}. 

The Full model uses all the observed attributes to train the predictors. The predictors achieve the best accuracy but the worst fairness performance as shown in the first row in Table~\ref{tab:Abla}. VFAE is the basic VAE-based unsupervised fair representation learning method. We set it to be the baseline in the second row in Table ~\ref{tab:Abla}. The third row is CF-VAE without adding causal constraints, which achieves similar results as VFAE since both methods remove sensitive information from the learnt representations. 

Then, we employ causal constraints and add TCR ($\gamma = 10$) in the loss function. As shown in the fourth row in Table ~\ref{tab:Abla}, this step retains causal relationships in latent representations and improves both accuracy and fairness performance than previous rows. The last step is to encourage $\mathbf{Z^{'}_x}$ and $\mathbf{Z_a}$ are disentangled by adding OPR. Our proposed CF-VAE achieves the best accuracy performance and $\mathrm{UFS_{R}}$ among most predictive models as shown in the last row in Table ~\ref{tab:Abla}.

\section{Related Works}
The machine learning literature has increasingly focused on exploring how algorithms can protect marginalised populations from unfair treatment. An important research area is how to quantify fairness, which can be divided into two categories, the statistical framework and the causal framework. 

In the statistical framework, Demographic parity was defined by \citet{zemel2013learning}, which is used to measure group level fairness. Other similar metrics include equalised odds~\cite{hardt2016equality}, predictive rate parity~\cite{zafar2017fairness}.  \citet{dwork2012fairness} proposed a measurement to quantify individual level fairness, that is, similar individuals should have similar treatments, and they use distance functions to measure how similar between individuals. In the causal framework, the (conditional) average causal effect is used to quantify fairness between groups~\cite{li2017discrimination}; Natural direct and natural indirect effects are used to quantify specific fairness~ \cite{zhang2017causal,nabi2018fair,zhang2018fairness}; When unfair causal paths are identified by domain knowledge, \citet{chiappa2019path} used the path-specific causal effects to quantify fairness on approved paths. For more related works, please refer to the literature review~\cite{zhang2017anti,corbett2018measure,mehrabi2021survey}. 

Our work is related to learning fair representations, which aims to encode data information into a lower space while removing sensitive information, and remaining causal relationships with respect to domain knowledge for building counterfactually fair predictive models. VAE~\cite{KingmaW13} and $\beta$-VAE~\cite{HigginsMPBGBML17}, as introduced in Section~\ref{VAE}, have inspired several studies in fair representation learning. \citet{LouizosSLWZ15} first introduced VAE for learning fair representation to disentangle the sensitive information and non-sensitive information, they proposed a semi-supervised method to encourage disentanglement by using ``Maximum Mean Discrepanc'' (MMD). However, \citet{zemel2013learning}, \citet{gitiaux2021learning} argued that in real-world applications, the organisations that collect the data cannot predict the downstream uses of the data and the models that might be used. Due to this, there are many following up works focusing on unsupervised learning fair representation. For example, \citet{creager2019flexibly} proposed an algorithm that can achieve group level fairness by adding demographic parity as a constraint in objection function; \citet{song2019learning} developed an information theory-based method for learning maximally expressive representations subject to fairness constraints that allows users to control the fairness of representations by specifying limits on unfairness.

Our approach combines counterfactual fairness and unsupervised representation learning to provide the proper representations to help predictive models achieve counterfactual fairness. We extend the definition of counterfactual fairness~\cite{kusner2017counterfactual} to the representation learning. Based on the current literature review, our work is the first method to use VAE-based techniques for unsupervised representation and satisfy counterfactual fairness. Furthermore, we innovatively embed domain knowledge into representations by adding causal constraints with respect to domain knowledge.

\section{Conclusion}
In this paper, we investigate unsupervised counterfactually fair representation learning and propose a novel method named CF-VAE which considers causal relationships with respect to domain knowledge. We theoretically demonstrate that the structured representations obtained by CF-VAE enable predictive models to achieve counterfactual fairness. Experimental results on real-world datasets show that CF-VAE achieves better accuracy and fairness performance on downstream predictive models than the benchmark fairness methods. Ablation study on synthetic datasets shows that causal constraints with total correction regularisation achieve better accuracy performance and orthogonality promoting regularisation encourages disentanglement with sensitive attributes.


\begin{acks}
Supported by Australian Research Council (DP200101210) and Postgraduate Research Scholarship of University of South Australia.
\end{acks}


\bibliographystyle{ACM-Reference-Format}
\bibliography{mybibliography}


\end{document}